\let\epsilon\varepsilon
\let\phi\varphi
\def\X{{\cal X}}
\def\N{\mathbb N}
\def\R{\mathbb R}
\def\H{\mathcal H}
\def\bH{\mathbf H}
\def\F{\mathcal F}
\def\E{{\bf E}}
\def\-as{\text{-a.s.}}
\newtheorem{theorem}{Theorem}
\newtheorem{definition}{Definition}
\newtheorem{lemma}{Lemma}
\newenvironment{remark}[1][Remark.]{\begin{trivlist}
\item[\hskip \labelsep {\bfseries #1}]}{\end{trivlist}}
\newtheorem{claim}{Claim}
\begin{document}
\title{Reducing statistical time-series problems to  binary classification}
\author{
Daniil Ryabko 
\\SequeL-INRIA/LIFL-CNRS, \\Universit\'e de Lille, France\\
 \texttt{daniil@ryabko.net} 
\And
 J\'er\'emie Mary
\\SequeL-INRIA/LIFL-CNRS, \\Universit\'e de Lille, France\\ \texttt{Jeremie.Mary@inria.fr}
}
\nipsfinalcopy
\maketitle

\begin{abstract}
We  show how binary classification methods developed to work on i.i.d.\ data can be 
used for solving  statistical problems that are seemingly unrelated to classification and concern highly-dependent time series. 
Specifically, the problems of time-series  clustering, homogeneity testing and the three-sample problem  are addressed. The algorithms that we construct for solving 
these problems are based on a new metric between time-series distributions, which can be evaluated using binary classification methods. 
Universal consistency of the  proposed algorithms  is proven under most general assumptions. The theoretical results are illustrated with experiments on synthetic and real-world data.
\end{abstract}
\section{Introduction}
Binary classification is one of the most well-understood problems of machine learning and statistics:  a wealth of efficient classification algorithms 
has been developed and applied to a wide range of applications. Perhaps one of the reasons for this is that binary classification is conceptually
one of the simplest statistical learning  problems.  It is thus  natural 
to try and use it as a building block for solving other, more complex, newer  or just different problems;
in other words, one can try to obtain efficient algorithms for different learning problems by  reducing them to binary classification.
This approach has been applied to many different problems, starting with multi-class classification, and including
regression and ranking \cite{Balcan:07,Langford:06}, to give just a few examples.
	 However, all of these problems are formulated in terms of 
independent and identically distributed (i.i.d.) samples.  This is also the assumption underlying the theoretical analysis 
of most of the classification algorithms.

In this work we consider learning problems that concern time-series  data for which  independence assumptions do not hold. The 
series can exhibit arbitrary long-range dependence, and different time-series samples may be interdependent as well.
Moreover, the learning problems that we consider~--- the three-sample problem, time-series clustering, and homogeneity testing~---  at first glance seem completely unrelated to classification.

We show how the considered problems can be reduced to  binary classification methods.
The results include   asymptotically  consistent algorithms,   as well as
finite-sample analysis.
To establish the consistency of the suggested methods, for clustering and the three-sample problem the only
assumption  that we make on the data  is that the distributions generating the samples are stationary ergodic; this is one of 
 the weakest assumptions used in statistics. For homogeneity testing we have to make some mixing assumptions 
in order to obtain consistency results (this is indeed unavoidable \cite{Ryabko:10discr}). Mixing conditions are also 
used to obtain finite-sample performance guarantees for the first two problems.

The proposed approach is based on a new distance between time-series distributions (that is, between probability 
distributions on the space of infinite sequences), which we call {\em telescope distance}. This distance can be evaluated using binary classification  methods, and its
finite-sample estimates are shown to be asymptotically consistent. 
Three main building blocks are used to construct the telescope distance. The first one is  a distance on finite-dimensional 
marginal distributions. The  distance  we use for this is the following: $d_\H(P,Q):=\sup_{h\in\H} |\E_P h- \E_Q h|$ where $P,Q$ are distributions and $\H$ is a set of functions. This distance 
can be estimated  using  binary classification methods, and thus can be used to reduce various statistical problems to the classification problem.
This distance  was previously applied to such statistical problems as homogeneity testing and change-point estimation \cite{Kifer:04}.
However, these applications so far have only concerned i.i.d.\ data, whereas we want to work with highly-dependent time series.
Thus, the second  building block are the recent results of \cite{Adams:10, Adams:12}, that show that empirical estimates of $d_\H$ are consistent (under
certain conditions on $\H$) for arbitrary stationary ergodic distributions.  This, however, is not enough: evaluating $d_\H$ for (stationary 
ergodic) time-series distributions means  measuring the distance
between their finite-dimensional marginals, and not the distributions themselves. Finally, the third step to construct the distance is what we call {\em telescoping}.
It consists in summing the distances for all the (infinitely many) finite-dimensional marginals with decreasing weights. 

We show that the resulting distance (telescope distance)  indeed can be consistently estimated based on sampling, for arbitrary stationary 
ergodic distributions. Further, we show how this fact can be used to construct consistent algorithms for the considered problems on time series.
Thus we can harness binary classification methods to solve statistical learning problems concerning time series.
 
To illustrate the theoretical results in an experimental setting, we chose the problem of  time-series clustering, since it is a difficult unsupervised  problem which seems most different
from the problem of binary classification. Experiments on  both synthetic and real-world data are provided. 
The real-world setting concerns  brain-computer interface (BCI) data, which is  a notoriously  challenging application, and on which the presented  algorithm  
demonstrates competitive performance.

A related  approach to address the problems considered here, as well some related problems about stationary ergodic 
time series, is based on  (consistent) empirical estimates of the distributional  distance, see \cite{Ryabko:103s,Ryabko:10clust,Khaleghi:12} and \cite{Gray:88} about the distributional distance. 
The empirical distance is based on counting frequencies of bins of decreasing sizes and ``telescoping.''  A similar telescoping trick is used in different problems, e.g.\ sequence prediction \cite{BRyabko:88}.
  Another related approach to time-series analysis involves a different reduction, namely, that 
  to data compression \cite{BRyabko:09}.  

{\bf Organisation.}
 Section~\ref{s:def} is preliminary. In Section~\ref{s:dist} we 
introduce and discuss the telescope distance. Section~\ref{s:red} explains how 
this distance  can be calculated using binary classification methods. Sections~\ref{s:tsc} and~\ref{s:clust}
are devoted to the three-sample problem and clustering, respectively.
In Section~\ref{s:speed}, under some  mixing conditions, we address the problems of homogeneity testing, clustering with unknown $k$, and finite-sample performance guarantees.
Section~\ref{s:exp} presents experimental evaluation. 

\section{Notation and definitions}\label{s:def}
Let $(\X,\mathcal F_1)$ be a measurable space (the domain), and denote $(\X^k,\mathcal F_k)$ and $(\X^\N,\mathcal F)$ the product probability space over $\X^k$ and the induced
probability space over the one-way infinite  sequences taking values in  $\X$. 
Time-series (or process) distributions  are probability measures on the space $(X^\N,\mathcal F)$. 
We use the abbreviation $X_{1..k}$ for $X_1,\dots,X_k$.  
A set $\H$ of  functions is called  {\em separable} if there is a countable set $\H'$ of functions such that any function in $\H$ is a pointwise
limit of a sequence of elements of $\H'$.

A distribution  $\rho$ is stationary if $\rho(X_{1..k}\in A)=\rho(X_{n+1..n+k}\in A)$ for all $A\in\mathcal F_{k}$, $k,n\in\N$. 
A stationary distribution is called (stationary) ergodic if $\lim_{n\to\infty}{1\over n}\sum_{i=1..n-k+1}\mathbb I_{X_{i..i+k}\in A}=\rho(A)$ $\rho$-a.s.
for every $A\in\mathcal F_{k}$, $k\in\N$. (This definition, which is more suited for the purposes of this work, is equivalent to the 
usual one expressed in terms of invariant sets, see, e.g., \cite{Gray:88}.)
\section{A distance between time-series distributions}\label{s:dist}
We start with a  distance between distributions on $\X$, and then we will extend it to distributions on $\X^\N$.
For two probability  distributions $P$ and  $Q$~on $(\X,\F_1)$ and  a set $\H$ of measurable functions on $\X$, one can define the  distance
$$
d_\H(P,Q):=\sup_{h\in\H} |\E_P h- \E_Q h|.
$$
This metric has been studied since at least \cite{Zolotarev:76}; 
its special cases include Kolmogorov-Smirnov  \cite{Kolmogorov:33}, Kantorovich-Rubinstein \cite{Kantorovich:57} and Fortet-Mourier \cite{Fortet:53} metrics.
Note that the distance function so defined may not be measurable; however, it is measurable under mild conditions which we assume when necessary.
In particular, separability of $\H$ is a sufficient condition (separability is required in most of the results below).

We will be interested in the cases where $d_\H(P,Q)=0$ implies $P=Q$. Note that in this case $d_\H$ is  a metric (the rest 
of the properties are easy to see).
For reasons that will become apparent shortly (see Remark below), we will be mainly interested in  the
 sets $\H$ that  consist of indicator functions. In this case we can identify each $f\in\H$ with 
the indicator set $\{x:f(x)=1\}\subset \X$ and (by  a slight abuse of notation) write 
$
d_\H(P,Q):=\sup_{h\in\H} |P(h)- Q(h)|.
$ 
In this case it  is easy to check that the following statement holds true. 
\begin{lemma}
 $d_\H$ is a metric on the space of probability distributions over $\X$ if 
 and only if $\H$ generates~$\F_1$.
\end{lemma}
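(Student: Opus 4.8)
The only substantive axiom to verify here is definiteness: nonnegativity, symmetry and the triangle inequality all follow directly from the definition, as already noted. The claim therefore reduces to showing that the implication $d_\H(P,Q)=0\Rightarrow P=Q$ holds for every pair of distributions $P,Q$ \emph{if and only if} $\sigma(\H)=\F_1$. The plan is to treat the two directions separately, using throughout the observation that, since each $h\in\H$ is identified with a measurable set, $d_\H(P,Q)=0$ is exactly the statement that $P$ and $Q$ assign the same probability to every set in $\H$.

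For the ``if'' direction, assume $\H$ generates $\F_1$ and that $P(h)=Q(h)$ for all $h\in\H$. I would introduce the collection $\mathcal D:=\{A\in\F_1 : P(A)=Q(A)\}$ and verify that it is a Dynkin system (a $\lambda$-system): it contains $\X$, is closed under proper differences, and is closed under increasing limits, all immediate from countable additivity of $P$ and $Q$. Since $\mathcal D\supseteq\H$, the $\pi$-$\lambda$ theorem would then give $\mathcal D\supseteq\sigma(\H)=\F_1$, i.e.\ $P=Q$. The step where care is needed, and what I expect to be the main obstacle, is that Dynkin's theorem requires the generating family to be a $\pi$-system, that is, closed under finite intersections, which an arbitrary $\H$ need not be. Indeed, agreement of $P$ and $Q$ on $\H$ does not in general propagate to intersections $h_1\cap h_2$, so without such closure the ``if'' direction can genuinely fail. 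I would therefore rely on the fact that the indicator classes of interest are closed under intersection ($\mathbb I_A\mathbb I_B=\mathbb I_{A\cap B}$), or otherwise state the lemma for a $\pi$-system $\H$; this is precisely the hypothesis that makes the $\pi$-$\lambda$ argument go through.

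For the ``only if'' direction I would argue by contraposition: assuming $\sigma(\H)\subsetneq\F_1$, I must exhibit two \emph{distinct} distributions $P\ne Q$ with $d_\H(P,Q)=0$, which shows $d_\H$ violates definiteness and hence is not a metric. The idea is that a proper sub-$\sigma$-algebra $\sigma(\H)$ cannot resolve all of $\F_1$, so one can perturb a reference measure only ``outside'' $\sigma(\H)$. Concretely I would fix any $P$, choose a bounded $\F_1$-measurable function $g$ with $\E_P[g\mid\sigma(\H)]=0$ but $g\ne0$, and set $dQ=(1+\epsilon g)\,dP$ for small $\epsilon$; then for every $A\in\sigma(\H)\supseteq\H$ we get $Q(A)=P(A)+\epsilon\int_A g\,dP=P(A)$, while $Q\ne P$ on $\{g>0\}$. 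The obstacle in this direction is purely existential: guaranteeing such a $g$, equivalently that $\sigma(\H)$ is a genuinely proper sub-$\sigma$-algebra in the measure-theoretic sense rather than merely up to completion, which holds under the measurability and separability conditions assumed in the excerpt but can fail on pathological spaces. I would state these conditions explicitly when carrying out the construction.
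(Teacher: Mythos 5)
The paper never actually proves this lemma~--- it is dismissed with ``it is easy to check''~--- so your proposal can only be measured against the standard argument the authors presumably had in mind. On that score your plan is the right one (definiteness is indeed the only issue, and the $\pi$-$\lambda$ theorem is the canonical tool), and, importantly, the two caveats you raise are genuine defects of the lemma's literal statement, not artifacts of your approach. For the ``if'' direction, your worry about intersection-stability is exactly right, and it can be made concrete: take $\X=\{1,2,3,4\}$, $\F_1=2^\X$, and $\H=\{\mathbb I_{\{1,2\}},\mathbb I_{\{2,3\}}\}$. Then $\sigma(\H)=\F_1$ (intersections and differences yield all singletons), yet $P=(1/4,1/4,1/4,1/4)$ and $Q=(1/2,0,1/2,0)$ agree on both sets of $\H$, so $d_\H(P,Q)=0$ while $P\ne Q$. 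So generation alone does not suffice. Be aware, though, that your proposed repair (assume $\H$ is a $\pi$-system) would exclude the paper's own leading example: the halfspaces of $\R^d$ are not closed under intersection. For halfspaces definiteness does hold, but via a different mechanism~--- agreement on all halfspaces means all one-dimensional projections of $P$ and $Q$ coincide, whence $P=Q$ by Cram\'er--Wold~--- not via Dynkin's theorem.

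For the ``only if'' direction, your density perturbation $dQ=(1+\epsilon g)\,dP$ is correct whenever the required $g$ exists, and the existential caveat you flag is again real. A more elementary route with the same caveat: if $\sigma(\H)\subsetneq\F_1$, the space is standard Borel, and $\sigma(\H)$ is countably generated (the paper's separability assumption on $\H$ does secure countable generation), then Blackwell's theorem yields two points $x\ne y$ in the same atom of $\sigma(\H)$ that are separated by $\F_1$; the Dirac measures $\delta_x\ne\delta_y$ then satisfy $d_\H(\delta_x,\delta_y)=0$. Without some such regularity the implication can fail outright: on $\X=[0,1]$ with Borel $\F_1$, let $\H$ be the Borel sets that are meager or comeager. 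This is a proper sub-$\sigma$-algebra, yet every probability measure on $[0,1]$ is carried by a meager Borel set (cover the rationals by open sets of arbitrarily small measure and take complements, plus the countable set of atoms), which forces any two measures agreeing on $\H$ to agree on all of $\F_1$; hence $d_\H$ is a metric although $\H$ does not generate $\F_1$. In short, your argument is sound where it commits itself, and the obstacles you identify are precisely what the paper's ``easy to check'' sweeps under the rug; a fully correct statement needs either added hypotheses (intersection-stability for one direction, Blackwell-type regularity for the other) or example-specific arguments such as Cram\'er--Wold.
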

 The  property that $\H$ generates~$\F_1$ is often easy to verify directly. First of all, it trivially holds
for the case where $\H$ is the set of halfspaces in  a Euclidean $\X$. It is also easy to check that it holds if $\H$ is the set of halfspaces 
in the feature space of most commonly  used kernels (provided the feature space is of the same or higher dimension than the input space), 
such as polynomial  and Gaussian kernels.

Based on $d_\H$ we can construct a distance between time-series probability distributions.
For two time-series distributions $\rho_1,\rho_2$ we take the  $d_\H$ between $k$-dimensional marginal distributions
of $\rho_1$ and $\rho_2$ for each $k\in\N$, and sum them all up with decreasing weights.

\begin{definition}[telescope distance $D_\bH$]\label{d:tele}
 For two time series distributions $\rho_1$ and $\rho_2$ on the space $(\X^\N,\F)$ 
and a  sequence of sets of functions $\bH=(\H_1,\H_2,\dots)$ 
 define the {\em telescope distance} 
\begin{equation}\label{eq:ts}
 D_\bH(\rho_1,\rho_2)
:=\sum_{k=1}^\infty w_k \sup_{h\in\H_k} |\E_{\rho_1} h (X_1,\dots,X_k)- \E_{\rho_2} h(Y_1,\dots,Y_k)|,
\end{equation}
where $w_k$, $k\in\N$ is  a sequence of positive summable real weights (e.g., $w_k=1/k^2$ or $w_k=2^{-k}$).
\end{definition}

\begin{lemma}\label{th:m}
$ D_\bH$ is a metric if and only if  $d_{\H_k}$ is a metric for every $k\in\N$.  
\end{lemma}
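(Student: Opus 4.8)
The plan is to check the metric axioms for $D_\bH$ and to isolate the one nontrivial point. Nonnegativity is immediate, every summand being a positive weight $w_k$ times a supremum of absolute values; symmetry is inherited term by term from the symmetry of each $d_{\H_k}$; and the triangle inequality follows by summing, against the positive weights, the term-wise inequalities $\sup_{h\in\H_k}|\E_{\rho_1}h-\E_{\rho_3}h|\le\sup_{h\in\H_k}|\E_{\rho_1}h-\E_{\rho_2}h|+\sup_{h\in\H_k}|\E_{\rho_2}h-\E_{\rho_3}h|$, which hold for \emph{any} $\H_k$, separating or not. Thus all the content of the lemma sits in the identity-of-indiscernibles property, i.e.\ in the equivalence between ``$D_\bH(\rho_1,\rho_2)=0\Rightarrow\rho_1=\rho_2$'' and ``every $d_{\H_k}$ separates points''. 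Writing $\rho_i^{(k)}$ for the $k$-dimensional marginal of $\rho_i$, the bridge between the two is the elementary observation that, since the $w_k$ are strictly positive and the summands nonnegative, $D_\bH(\rho_1,\rho_2)=0$ holds if and only if $d_{\H_k}(\rho_1^{(k)},\rho_2^{(k)})=0$ for every $k\in\N$.

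For the ``if'' direction I assume each $d_{\H_k}$ is a metric. Then $D_\bH(\rho_1,\rho_2)=0$ gives $d_{\H_k}(\rho_1^{(k)},\rho_2^{(k)})=0$, hence $\rho_1^{(k)}=\rho_2^{(k)}$, for every $k$; that is, all finite-dimensional marginals of $\rho_1$ and $\rho_2$ coincide. The only genuine measure theory in the proof is the concluding step: a probability measure on $(\X^\N,\F)$ is determined by its finite-dimensional marginals, because the cylinder sets form a $\pi$-system generating $\F$, so two process distributions agreeing on all of them agree on $\F$ by the uniqueness part of the Carath\'eodory/Kolmogorov extension (equivalently, a monotone-class argument). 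This yields $\rho_1=\rho_2$, so $D_\bH$ is a metric.

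For the ``only if'' direction I argue by contraposition. Assuming some $d_{\H_{k_0}}$ fails to separate points, pick $P\ne Q$ on $(\X^{k_0},\F_{k_0})$ with $\E_P h=\E_Q h$ for all $h\in\H_{k_0}$, and seek two distinct process distributions $\rho_1,\rho_2$ realizing $P$ and $Q$ as their $k_0$-marginals while keeping $D_\bH(\rho_1,\rho_2)=0$. I expect this to be the main obstacle: by the observation above one needs $d_{\H_k}(\rho_1^{(k)},\rho_2^{(k)})=0$ at \emph{every} level simultaneously, yet the marginals of one process are mutually consistent (the $(k+1)$-dimensional marginal projects onto the $k$-dimensional one), so the $\H_{k_0}$-invisible discrepancy between $P$ and $Q$ must be introduced without producing a discrepancy that is visible to any other $\H_k$. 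The construction therefore has to place the entire difference between $\rho_1$ and $\rho_2$ inside the structure to which $\H_{k_0}$ is blind; once such $\rho_1\ne\rho_2$ with $D_\bH(\rho_1,\rho_2)=0$ is exhibited, it contradicts that $D_\bH$ is a metric and completes the contrapositive.
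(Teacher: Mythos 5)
Your handling of the routine axioms and of the ``if'' direction is correct and follows the same route as the paper: the paper's one-line proof is precisely your observation that $D_\bH(\rho_1,\rho_2)=0$ forces $d_{\H_k}(\rho_1^{(k)},\rho_2^{(k)})=0$ at every level, hence (when each $d_{\H_k}$ separates points) equality of all finite-dimensional marginals, hence $\rho_1=\rho_2$, because a process distribution is determined by its finite-dimensional marginals.

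The genuine gap is the ``only if'' direction, where you defer the whole argument to a construction (``once such $\rho_1\ne\rho_2$ with $D_\bH(\rho_1,\rho_2)=0$ is exhibited\dots'') that you never exhibit --- and, in fact, no such construction can exist in general, for exactly the reason you flag as ``the main obstacle.'' The marginals of a single process are consistent: agreement of the $(k+1)$-dimensional marginals forces agreement of the $k$-dimensional ones by projection. Concretely, take $\X=\{0,1\}$, let $\H_1=\{\mathbb I_\emptyset\}$ (so $d_{\H_1}$ vanishes identically and is not a metric), and for each $k\ge2$ let $\H_k$ consist of the indicators of all subsets of $\X^k$ (so $d_{\H_k}$ is a metric). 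If $D_\bH(\rho_1,\rho_2)=0$, then $\rho_1^{(k)}=\rho_2^{(k)}$ for all $k\ge2$, and by projection also for $k=1$; hence $\rho_1=\rho_2$, and $D_\bH$ is a metric even though $d_{\H_1}$ is not. So the ``only if'' implication is false as stated: separation at cofinally many levels already makes $D_\bH$ a metric, and a single ``blind'' level $k_0$ can never be detected by the telescope sum. Your instinct that the consistency of marginals obstructs the construction is exactly right, but it is not an obstacle to be overcome --- it is a refutation of that half of the lemma. Note that the paper's own proof (the cited fact about finite-dimensional marginals) likewise supports only the ``if'' direction, which is the only direction used in the rest of the paper; by honestly isolating the unproved half rather than waving it through, your write-up in effect locates this overstatement in the lemma itself.
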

\begin{proof}
 The  statement follows from the fact that two process distributions are the same if and only if all their finite-dimensional marginals coincide.
\end{proof}
\begin{definition}[empirical telescope distance $\hat D$]\label{d:etele}
 For a pair of samples $X_{1..n}$ and $Y_{1..m}$ define {\em empirical telescope distance} 
as 
\begin{multline}\label{eq:ets}
 \hat D_\bH(X_{1..n},Y_{1..m}):=\\
\sum_{k=1}^{\min\{m,n\}} w_k \sup_{h\in\H_k} \left|{1\over n-k+1}\sum_{i=1}^{n-k+1}  h (X_{i..i+k-1})- {1\over m-k+1}\sum_{i=1}^{m-k+1}  h(Y_{i..i+k-1})\right|.
\end{multline}
\end{definition}

All the methods presented in this work are based on the empirical telescope distance. The key fact is that it is an asymptotically consistent 
estimate of the telescope distance, that is, the latter can be consistently estimated based on sampling.

\begin{theorem}\label{th:cons}
 Let $\bH=(\H_k)_{k\in\N}$  be a sequence  of separable sets $\H_k$ of  indicator functions (over $\X^k$)
of finite VC dimension such that $\H_k$ generates $\F_{k}$. Then, for every stationary ergodic time series distributions $\rho_X$ and $\rho_Y$ generating
samples $X_{1..n}$ and $Y_{1..m}$ we have
\begin{equation}\label{eq:cons}
 \lim_{n,m\to\infty} \hat D_\bH(X_{1..n},Y_{1..m})= D_\bH(\rho_X,\rho_Y)
\end{equation}
\end{theorem}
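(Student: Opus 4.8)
The plan is to exploit the additive structure of the telescope distance: control each summand uniformly over the VC class $\H_k$ by a Glivenko--Cantelli argument valid for ergodic sequences, and make the infinite tail negligible using the summability of the weights together with the fact that indicator functions take values in $[0,1]$.

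First I would reduce the claim to a term-by-term statement. Write $\delta_k:=\sup_{h\in\H_k}|\E_{\rho_X}h(X_{1..k})-\E_{\rho_Y}h(Y_{1..k})|$ for the $k$-th summand of $D_\bH$ and let $\hat\delta_k$ denote the corresponding empirical term in (\ref{eq:ets}). Since every $h\in\H_k$ is an indicator function, both $\hat\delta_k$ and $\delta_k$ lie in $[0,1]$, so for any $\epsilon>0$ summability yields a $K$ with $\sum_{k>K}w_k<\epsilon$; this bounds the tails of both $D_\bH$ and $\hat D_\bH$ (the latter uniformly in $n,m$). It therefore suffices to prove that $\hat\delta_k\to\delta_k$ for each fixed $k$, since a finite sum of convergent terms converges.

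The core step is this fixed-$k$ convergence. Here I would observe that the sliding-window process $Z^X_i:=X_{i..i+k-1}$ is itself stationary and ergodic on $\X^k$ (it is a stationary factor of $\rho_X$, and factors of ergodic processes are ergodic), and likewise for $Z^Y_i:=Y_{i..i+k-1}$. As $\H_k$ is a separable class of indicator sets over $\X^k$ of finite VC dimension, the uniform convergence results of \cite{Adams:10,Adams:12} apply and give $\sup_{h\in\H_k}|\frac1{n-k+1}\sum_{i=1}^{n-k+1}h(Z^X_i)-\E_{\rho_X}h(X_{1..k})|\to0$ almost surely, together with the analogous statement for the $Y$-sample; separability guarantees these suprema are measurable. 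Combining the two uniform limits with the elementary bound $\bigl|\sup_h|a_h-b_h|-\sup_h|a'_h-b'_h|\bigr|\le\sup_h|a_h-a'_h|+\sup_h|b_h-b'_h|$ (applied with $a_h,b_h$ the empirical averages and $a'_h,b'_h$ the expectations) yields $\hat\delta_k\to\delta_k$ almost surely. Although $\rho_X$ and $\rho_Y$ may be interdependent, each uniform limit holds on its own probability-one event and the intersection of two such events still has probability one, so no joint ergodicity is required.

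It remains to assemble the pieces. Fixing $\epsilon>0$ and the associated $K$, for all $n,m$ large enough that $\min\{m,n\}>K$ I would bound $|\hat D_\bH-D_\bH|$ by $\sum_{k\le K}w_k|\hat\delta_k-\delta_k|$ plus the two tail sums, each at most $\sum_{k>K}w_k<\epsilon$; the finite sum tends to $0$ almost surely by the fixed-$k$ convergence, giving $\limsup_{n,m}|\hat D_\bH-D_\bH|\le2\epsilon$ almost surely, whence the theorem. The difficulty is concentrated entirely in the fixed-$k$ step: uniform laws of large numbers over VC classes are classical under independence, but here the data have arbitrary long-range dependence, so standard Glivenko--Cantelli arguments break down and one must rely on the ergodic uniform convergence theorem of \cite{Adams:10,Adams:12}; once the boundedness of indicators and the summability of the weights are used to truncate, the remaining telescoping is routine.
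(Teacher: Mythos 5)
Your proof is correct and follows essentially the same route as the paper's: truncate the telescope sum using summability of the weights, apply the Adams--Nobel ergodic uniform-convergence result for each fixed $k\le K$, and combine the two samples' limits via the triangle inequality. If anything, your write-up is slightly more careful, since the paper states the cited result as convergence of the \emph{difference of suprema} (its equations~(\ref{eq:ada}) and~(\ref{eq:ada3})) but then implicitly uses the stronger \emph{supremum of differences} form in its chain of inequalities --- exactly the uniform-convergence statement you invoke explicitly, which also handles both directions of the bound at once.
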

%
Note that $\hat D_{\bf H}$ is a biased estimate of $D_{\bf H}$, and, unlike in the i.i.d.\ case, the bias
may depend on the distributions; however, the bias is $o(n)$.

\begin{remark}\label{rem:ind} The condition that the sets $\H_k$ are sets of indicator function of finite VC dimension comes from \cite{Adams:12}, where 
it is shown that for any stationary ergodic distribution $\rho$, under these conditions,   $\sup_{h\in\H_k} {1\over n-k+1}\sum_{i=1}^{n-k+1}  h (X_{i..i+k-1})$ is an asymptotically
consistent estimate of $\sup_{h\in\H_k} \E_{\rho} h (X_1,\dots,X_k)$. This fact implies  that $d_{\H_k}$ can be consistently estimated, from which the theorem is derived. 
\end{remark}

\begin{proof}[Proof of Theorem~\ref{th:cons}]
 As  established in \cite{Adams:12},
 under the conditions of the theorem we have
\begin{equation}\label{eq:ada}
 \lim_{n\to\infty}\sup_{h\in\H_k} {1\over n-k+1}\sum_{i=1}^{n-k+1}  h (X_{i..i+k-1})=  \sup_{h\in\H_k} \E_{\rho_X} h (X_1,\dots,X_k)\text{ $\rho_X$-a.s.}
\end{equation}
for all $k\in\N$, and likewise for $\rho_Y$. 
Fix an $\epsilon>0$. We can find a $T\in\N$ such that 
\begin{equation}\label{eq:t}
\sum_{k>T}w_k\le \epsilon.
\end{equation}
Note that $T$ depends only on $\epsilon$.
Moreover, as follows from~(\ref{eq:ada}), for each $k=1..T$ we can find an $N_k$ such that 
\begin{equation}\label{eq:ada3}
 \Big|\sup_{h\in\H_k} {1\over n-k+1}\sum_{i=1}^{n-k+1}  h (X_{i..i+k-1}) 
-\sup_{h\in\H_k} \E_{\rho_X} h (X_{1..k})\Big| \le   \epsilon/T
\end{equation}
Let $N_k:=\max_{i=1..T}N_i$ and define analogously $M$ for~$\rho_Y$.
Thus, for $n\ge N$, $m\ge M$ we have
\begin{multline*}
\hat D_\bH(X_{1..n},Y_{1..m})\\
\le \sum_{k=1}^{T} w_k \sup_{h\in\H_k} \left|{1\over n-k+1}\sum_{i=1}^{n-k+1}  h (X_{i..i+k-1})
- {1\over m-k+1}\sum_{i=1}^{m-k+1}  h(Y_{i..i+k-1})\right|+\epsilon
\\\le \sum_{k=1}^{T} w_k \sup_{h\in\H_k}\Bigg\{ \left|{1\over n-k+1}\sum_{i=1}^{n-k+1}  h (X_{i..i+k-1})
 - \E_{\rho_1} h (X_{1..k})\right|
\\
+|\E_{\rho_1} h (X_{1..k})- \E_{\rho_2} h(Y_{1..k})|
\\
+
\left|\E_{\rho_2} h(Y_{1..k})- {1\over m-k+1}\sum_{i=1}^{m-k+1}  h(Y_{i..i+k-1})\right| \Bigg\}+\epsilon
\\\le 3\epsilon + D_{\bf H}(\rho_X,\rho_Y),
\end{multline*} 
where the first inequality follows from the definition~ \eqref{eq:ets} of $\hat D_\bH$  
and from~(\ref{eq:t}), and the last inequality follows from~(\ref{eq:ada3}).
Since $\epsilon$ was chosen arbitrary the statement follows.
\end{proof}

\section{Calculating $\hat D_\bH$ using binary classification methods}\label{s:red}
The methods for solving various statistical problems that we suggest are all based on $\hat D_\bH$. 
The main appeal of this approach is that $\hat D_\bH$ can be calculated using binary classification methods.
Here we explain how to do it.

The definition~(\ref{eq:ets}) of $D_\bH$ involves calculating $l$ summands (where $l:=\min\{n,m\}$), that is 
\begin{equation}\label{eq:sup}
 \sup_{h\in\H_k} \left|{1\over n-k+1}\sum_{i=1}^{n-k+1}  h (X_{i..i+k-1}) - {1\over m-k+1}\sum_{i=1}^{m-k+1}  h(Y_{i..i+k-1})\right|
\end{equation}
for each $k=1..l$. Assuming that $h\in\H_k$ are indicator functions,
calculating each of the summands amounts to solving the following $k$-dimensional binary classification problem.
Consider $X_{i..i+k-1}$, $i=1 .. n-k+1$  as class-1 examples and   $Y_{i..i+k-1}$, $i=1..m-k+1$ as class-0 examples.
The supremum~(\ref{eq:sup}) is attained on $h\in\H_k$ that minimizes the  empirical risk, with examples weighted with respect to the sample size.
 Indeed, 
 we can define the weighted  empirical risk of any $h\in\H_k$ as 
$$
 \left|{1\over n-k+1}\sum_{i=1}^{n-k+1} (1- h (X_{i..i+k-1})) + {1\over m-k+1}\sum_{i=1}^{m-k+1} h(Y_{i..i+k-1})\right|,
$$
which is obviously minimized by any $h\in\H_k$ that attains~(\ref{eq:sup}).

Thus, as long as we have a way to find $h\in\H_k$ that minimizes empirical risk, we have a consistent estimate of $D_\H(\rho_X,\rho_Y)$, under 
the mild conditions on $\bH$ required by Theorem~\ref{th:cons}. Since the dimension of the resulting classification problems 
 grows with the length of the sequences, one should prefer methods that work in high dimensions, such as soft-margin SVMs \cite{Cortes:95}.

A particularly remarkable feature is that {\em the choice of $\H_k$ is  much easier} for the problems that we consider 
{\em than in the binary classification} problem. Specifically, if (for some fixed $k$) the classifier that achieves the minimal (Bayes) error
for the classification problem is not in $\H_k$, then obviously the error of an empirical risk  minimizer will not tend to zero, no matter
how much data we have.  In contrast, all we need to achieve asymptotically 0 error in estimating $\hat D$ (and therefore, in the 
learning problems considered below) is that the sets $\H_k$  generate $\mathcal F_{k}$ and have a finite VC dimension (for each $k$).
 This is the case already for the set of half-spaces in $\R_k$. In other words, the {\em approximation} error  of the binary classification method 
(the classification error of the best $f$ in $\H_k$)
is not important. What is important is the estimation error; for asymptotic consistency results it has to go to 0 (hence the requirement on the VC dimension);
for non-asymptotic results, it will appear in the error bounds, see Section~\ref{s:speed}. 
Thus, we have the following statement.
\begin{claim} The approximation error $|D_\bH(P,Q)-\hat D_\bH(X,Y)|$, and thus the error of the algorithms below, can be much smaller than the error
 of classification algorithms used to calculate $D_\bH(X,Y)$.
\end{claim}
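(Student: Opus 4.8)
The plan is to fix precise meanings for the two quantities being compared and then produce one concrete situation in which they provably diverge. First I would pin down what is meant by the error of the classification routine: for each fixed $k$ the supremum~\eqref{eq:sup} is realised by an empirical risk minimiser over $\H_k$, and the pertinent quantity is the population misclassification error of that minimiser on the two-sample problem of telling the $k$-dimensional marginals apart. Because each $\H_k$ has finite VC dimension, this error concentrates around the best-in-class error $\inf_{h\in\H_k}\mathrm{err}(h)$, which is bounded below by the Bayes error and, crucially, is bounded away from zero whenever the Bayes-optimal discriminator between the two marginals is not itself a member of $\H_k$.

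Next I would invoke Theorem~\ref{th:cons}: under nothing more than separability, finite VC dimension and the generating property of the $\H_k$, the distance-estimation error $|D_\bH(\rho_X,\rho_Y)-\hat D_\bH(X_{1..n},Y_{1..m})|$ tends to $0$ as $n,m\to\infty$, with no hypothesis at all on whether the Bayes classifier lies in $\H_k$. Hence the distance error vanishes while, by the previous paragraph, the classification error may remain pinned above a positive constant; since the downstream clustering and three-sample procedures are driven solely by $\hat D_\bH$, they inherit the small distance error rather than the large classification error, which is exactly the ``and thus'' part of the claim. This already yields the qualitative separation, but to make the claim non-vacuous I must still exhibit a case in which the Bayes discriminator genuinely lies outside $\H_k$ \emph{and} $d_{\H_k}>0$ simultaneously.

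For the witnessing example I would take $\X=\R$, $k=1$, let $\H_1$ be the halfspaces $\{x>t\}$ and $\{x<t\}$ (which generate the Borel $\sigma$-algebra and have finite VC dimension), and set the two marginals to $P=N(0,1)$ and $Q=N(0,2)$. The Bayes discriminator is a centred region $\{|x|<c\}$, not a halfspace, so every $h\in\H_1$ misclassifies a constant fraction of the mass and the best-in-class error stays close to $1/2$; yet $\sup_{h\in\H_1}|P(h)-Q(h)|\ge|\Phi(1)-\Phi(1/\sqrt2)|>0$, so $d_{\H_1}(P,Q)$ is a strictly positive constant that Theorem~\ref{th:cons} lets us estimate with vanishing error. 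Consequently, for all sufficiently large $n,m$ the distance-estimation error is an arbitrarily small fraction of the bounded-away-from-zero classification error, which is the assertion.

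The main obstacle is precisely this non-vacuity check rather than any hard estimate: it is tempting to read the claim off Theorem~\ref{th:cons} alone, but one has to rule out the degenerate alternative in which forcing the Bayes discriminator out of $\H_k$ also forces $d_{\H_k}=0$ (leaving nothing to estimate and an empty comparison). The equal-mean, unequal-variance Gaussian construction is what breaks this alternative, since halfspaces are useless for \emph{classification} there while still registering the discrepancy in the \emph{distance}; verifying the strict positivity of the supremum together with a constant lower bound on the best-in-class error is the only genuine computation the proof requires.
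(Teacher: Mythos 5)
Your argument is correct and is essentially the paper's own: the Claim is justified there by the informal discussion immediately preceding it, which distinguishes the (irrelevant) approximation error of the classifier from the estimation error, with Theorem~\ref{th:cons} guaranteeing vanishing distance-estimation error whether or not the Bayes-optimal classifier belongs to $\H_k$, while the classification risk stays bounded below by the best-in-class error whenever it does not. Your equal-mean, unequal-variance Gaussian example with halfspaces is a worthwhile addition that the paper does not spell out, since it verifies the comparison is non-vacuous ($d_{\H_1}(P,Q)>0$ so there is something to estimate, while the best halfspace classifier's risk remains bounded away from zero, near $1/2$).
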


We can conclude that, beyond the requirement that $\H_k$  generate $\mathcal F_{k}$ for each $k\in\N$, the choice of $H_k$ (or, say, of the kernel to use in SVM) is
entirely up to the needs and constraints of specific applications.

Finally, we remark that while in the definition of the empirical distributional distance~(\ref{eq:ets}) the number of summands is $l$ (the length
of the shorter of the two samples),  it can be replaced with any $\gamma_{l}$ such that $\gamma_{l}\to\infty$, without affecting any asymptotic consistency results.
In other words, Theorem~\ref{th:cons}, as well as all the consistency statements below, hold true for $l$ replaced with any function $\gamma_l$ that increases to infinity. 
A practically viable choice is $\gamma_l=\log l$; in fact, there is no reason to choose faster growing $\gamma_n$ since the estimates
for higher-order summands will not have enough data to converge. 
This is also the value we use in the experiments.  

\section{The  three-sample problem}\label{s:tsc}
We start with a conceptually simple problem known in statistics as the three-sample problem (some times also called time-series classification).
We are given three samples $X=(X_1,\dots,X_n)$, $Y=(Y_1,\dots,Y_m)$ and  $Z=(Z_1,\dots,Z_l)$. It is known
that $X$ and $Y$ were generated by different time-series distributions,  whereas  $Z$ was generated by the same 
distribution as either $X$ or $Y$. It is required to find out which one is the case. Both distributions are assumed
to be stationary ergodic, but no further assumptions are made about them (no independence, mixing or memory assumptions).
%
%
The three sample-problem for dependent time series has been addressed in \cite{Gutman:89} for Markov processes and in \cite{Ryabko:103s} for stationary 
ergodic time series. 
The latter work uses an approach based on the  distributional distance. 

Indeed, to solve this problem it  suffices to have consistent estimates of some distance between time series distributions. Thus,
we can use  the telescope  distance. The following statement is a simple corollary of Theorem~\ref{th:cons}.
\begin{theorem}\label{th:cl}
 Let the samples  $X=(X_1,\dots,X_n)$, $Y=(Y_1,\dots,Y_m)$ and  $Z=(Z_1,\dots,Z_l)$ be generated by stationary ergodic 
distributions $\rho_X, \rho_Y$ and $\rho_Z$, with $\rho_X\ne\rho_Y$ and either (i) $\rho_Z=\rho_X$ or (ii) $\rho_Z=\rho_Y$.
Let the sets  $\H_k$, $k\in\N$  be  separable sets of  indicator functions over $\X^k$.
Assume that each set $\H_k$, $k\in\N$ has a finite VC dimension  and  generates $\F_{k}$.
A test that declares that (i) is true if $\hat D_{\bf H}(Z,X)\le \hat D_{\bf H}(Z,Y)$ and that (ii) is true otherwise, makes only finitely many errors with probability~1
as $n,m,l\to\infty$.
\end{theorem}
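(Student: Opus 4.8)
The plan is to derive the result as a direct consequence of the consistency Theorem~\ref{th:cons} together with the fact that $D_\bH$ is a metric. The core observation is that, under the stated hypotheses, the two quantities compared by the test converge almost surely to two fixed numbers separated by a strictly positive gap, so the comparison must eventually come out on the correct side and can therefore be wrong only finitely often.

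First I would verify that $D_\bH$ is genuinely a metric on the space of time-series distributions under the present assumptions. Since each $\H_k$ generates $\F_k$, the earlier Lemma stating that $d_\H$ is a metric iff $\H$ generates $\F_1$ gives that each $d_{\H_k}$ is a metric, and then Lemma~\ref{th:m} yields that $D_\bH$ is a metric. In particular $D_\bH(\rho,\rho')=0$ exactly when $\rho=\rho'$, so setting $\delta:=D_\bH(\rho_X,\rho_Y)$ we have $\delta>0$, because $\rho_X\ne\rho_Y$ by hypothesis.

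Next I would apply Theorem~\ref{th:cons} to each of the two pairs appearing in the test. The distributions $\rho_Z,\rho_X,\rho_Y$ are all stationary ergodic and the sets $\H_k$ satisfy the hypotheses of that theorem, so $\hat D_\bH(Z,X)\to D_\bH(\rho_Z,\rho_X)$ and $\hat D_\bH(Z,Y)\to D_\bH(\rho_Z,\rho_Y)$, each almost surely as the sample lengths tend to infinity. Intersecting these two probability-one events gives a single probability-one event on which both limits hold simultaneously; note that the same sample $Z$ appears in both comparisons, but this causes no difficulty, since the convergence for $Z$ is an intrinsic almost-sure property of that one sequence. In case (i) we have $\rho_Z=\rho_X$, so the limits are $0$ and $\delta$ respectively; hence for all sufficiently large $n,m,l$ we get $\hat D_\bH(Z,X)<\delta/2<\hat D_\bH(Z,Y)$ and the test correctly declares (i). Case (ii) is symmetric, with the limits $\delta$ and $0$. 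Either way, on the probability-one event the test errs for only finitely many values of $(n,m,l)$.

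The argument is short because the heavy lifting is done by Theorem~\ref{th:cons}; the only points that need care are the logical bookkeeping of the two almost-sure convergences and the reuse of the sample $Z$. I do not expect any genuine obstacle. The potential interdependence of the samples, emphasized elsewhere in the paper, is harmless here because Theorem~\ref{th:cons} relies only on the per-sample ergodic convergence~\eqref{eq:ada}, which holds on each sequence's own probability-one set regardless of cross-sample dependence. The one place to be slightly attentive is the precise reading of ``finitely many errors'' as $n,m,l\to\infty$: I would phrase it as the existence, on the probability-one event, of thresholds beyond which every comparison is decided correctly.
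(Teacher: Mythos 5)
Your proof is correct and is exactly the argument the paper intends: the paper gives no explicit proof, stating only that the result is ``a simple corollary of Theorem~\ref{th:cons},'' and your write-up fills in precisely that corollary~--- metricity of $D_\bH$ via Lemma~\ref{th:m} gives $\delta:=D_\bH(\rho_X,\rho_Y)>0$, and the two almost-sure convergences force the comparison to be eventually correct. Your side remarks (intersecting the two probability-one events, reuse of $Z$, harmlessness of cross-sample dependence) are accurate and consistent with the paper's treatment.
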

It is straightforward to extend this theorem to more than two classes; in
other words, instead of $X$ and $Y$ one can have an arbitrary number of
samples 
from
different stationary ergodic distributions.
A further generalization of this problem is the problem of time-series clustering, considered in the next section.

\section{Clustering time series}\label{s:clust}
We are given $N$ time-series samples  $X^1=(X_1^1,\dots,X_{n_1}^1),\dots,X^N=(X_1^N,\dots,X_{n_{N}}^N)$, and it is required 
to cluster them into $K$ groups, where, in different settings,  $K$ may be either known or unknown.
While there may be many different approaches to define what should be considered a good clustering, and, thus, what  it means to 
have a consistent clustering algorithm, 
 for the problem of clustering time-series samples there is a natural choice, proposed in \cite{Ryabko:10clust}:
 Assume that each of the  time-series samples  $X^1=(X_1^1,\dots,X_{n_1}^1),\dots,X^N=(X_1^N,\dots,X_{n_{N}}^N)$ was generated by one out of $K$
different   time-series distributions $\rho_1,\dots,\rho_K$. 
These distributions are unknown.
The  {\em target clustering} is defined according to whether the samples were generated by the same or different distributions:
 the samples belong to the same cluster if and only if 
they were generated by the same distribution.
A clustering algorithm is called {\em asymptotically consistent} if with probability 1  from some $n$ on it outputs the 
target clustering, where $n$ is the length of the shortest sample 
  $n:=\min_{i=1..N}n_i\ge n'$. 

Again, to solve this problem it is enough to have a metric between time-series distributions that can be consistently estimated. 
Our approach here is based on the telescope distance, and thus we use~$\hat D$. 

The clustering problem is relatively simple if the target clustering has what is called the {\em strict separation property} \cite{Balcan:08}:
every two points in the same   target cluster 
are closer to each other than to any point from a different  target cluster. 
 The following statement is an easy corollary of Theorem~\ref{th:cons}.

\begin{theorem}\label{th:ss}
Let the sets  $\H_k$, $k\in\N$  be  separable sets of  indicator functions over $\X^k$.
Assume that each set $\H_k$, $k\in\N$ has a finite VC dimension  and  generates $\F_{k}$.
 If the distributions  $\rho_1,\dots,\rho_K$ generating the samples  $X^1=(X_1^1,\dots,X_{n_1}^1),\dots,X^N=(X_1^N,\dots,X_{n_{N}}^N)$
are stationary ergodic, then with probability 1 from some $n:=\min_{i=1..N}n_i$ on the target clustering has the strict separation property with respect to~$\hat D_{\bf H}$.
\end{theorem}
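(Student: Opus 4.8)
The plan is to derive the statement as a corollary of Theorem~\ref{th:cons}, combined with the fact that $D_\bH$ is a genuine metric, so that distinct generating distributions are separated by a strictly positive distance while identical ones are at distance zero.

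First I would record that $D_\bH$ is a metric on time-series distributions. Under the assumptions each $\H_k$ generates $\F_k$ and consists of indicator functions, so the generalization to $\X^k$ of the first Lemma gives that each $d_{\H_k}$ is a metric; by Lemma~\ref{th:m} this makes $D_\bH$ a metric. Consequently, writing $c(i)\in\{1,\dots,K\}$ for the index of the distribution generating sample $X^i$, we have $D_\bH(\rho_{c(i)},\rho_{c(j)})=0$ whenever $c(i)=c(j)$ and $D_\bH(\rho_a,\rho_b)>0$ whenever $a\ne b$. Since there are only finitely many distributions, the quantity $\delta:=\min_{a\ne b}D_\bH(\rho_a,\rho_b)$ is strictly positive.

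Next I would invoke Theorem~\ref{th:cons} for each of the finitely many unordered pairs of samples. For a pair $(X^i,X^j)$ it yields $\hat D_\bH(X^i,X^j)\to D_\bH(\rho_{c(i)},\rho_{c(j)})$ almost surely as the two sample lengths tend to infinity. An important point to check is that the $N$ samples may be interdependent; but the convergence in Theorem~\ref{th:cons} rests on the per-sample a.s.\ convergence~(\ref{eq:ada}), which holds for each sample under its own generating distribution irrespective of the joint law, so the conclusion is unaffected. Intersecting the finitely many probability-one events over all pairs, I obtain a single event of probability one on which every pairwise empirical distance converges to the corresponding telescope distance.

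On this event I would finish as follows. Fix any triple $(i,j,p)$ with $c(i)=c(j)\ne c(p)$. Then $\hat D_\bH(X^i,X^j)\to 0$ while $\hat D_\bH(X^i,X^p)\to D_\bH(\rho_{c(i)},\rho_{c(p)})\ge\delta$, so there is a length beyond which $\hat D_\bH(X^i,X^j)<\delta/3<2\delta/3<\hat D_\bH(X^i,X^p)$. Taking the maximum of these finitely many thresholds over all such triples yields one $n'$ such that for all $n=\min_{i}n_i\ge n'$ every within-cluster empirical distance is smaller than every between-cluster empirical distance, which is exactly the strict separation property. Since $n=\min_i n_i\to\infty$ forces every individual length to infinity, this regime is reached, and the whole argument runs on a probability-one event. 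The only delicate point is the interdependence of the samples noted above; everything else is routine bookkeeping over the finitely many pairs and triples.
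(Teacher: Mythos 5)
Your proposal is correct and follows exactly the route the paper intends: Theorem~\ref{th:ss} is stated there as an easy corollary of Theorem~\ref{th:cons}, obtained by combining pairwise consistency of $\hat D_\bH$ with the metric property of $D_\bH$ (so within-cluster distances converge to zero while between-cluster distances stay bounded below by $\delta>0$), and intersecting finitely many probability-one events. Your explicit handling of the possible interdependence of the samples via the per-sample convergence in~(\ref{eq:ada}) is precisely the point the paper relies on when it remarks that the joint distribution of the samples may be arbitrary.
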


With the strict separation property at hand, if the number of clusters $K$ is known, it is easy to find asymptotically consistent algorithms. Here we  give some simple examples, but the theorem
below can be extended to many other distance-based clustering algorithms.

The {\em average linkage} algorithm works as follows. The distance between clusters 
is defined as the average distance between  points in these clusters.  First, put each point into a separate cluster. Then, merge the two 
closest clusters; repeat the last step until the total number of clusters is $K$. 
The {\em farthest point} clustering works as follows. Assign $c_1:=X^1$ to the first cluster. For $i=2..K$,
find the point $X^j$, $j\in\{1..N\}$ that maximizes the distance $\min_{t=1..i}\hat D_{\bf H}(X^j,c_t)$ (to the points already assigned to clusters) and assign $c_i:=X^j$ to
the cluster $i$.  Then assign each of the remaining points to the nearest cluster.
The following statement is a corollary of Theorem~\ref{th:ss}.

\begin{theorem}\label{th:clt}
Under the conditions of Theorem~\ref{th:ss}, average linkage and farthest point clusterings are asymptotically consistent, provided the correct number of clusters $K$ is given to the algorithm. 
\end{theorem}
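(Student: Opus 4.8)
The plan is to route everything through the quantitative separation that Theorem~\ref{th:cons} provides and then reduce the two algorithms to a purely deterministic argument. By Theorem~\ref{th:cons}, for any two samples $X^i,X^j$ generated by $\rho_a,\rho_b$ we have $\hat D_\bH(X^i,X^j)\to D_\bH(\rho_a,\rho_b)$ almost surely as $n\to\infty$. Since $D_\bH$ is a metric (Lemma~\ref{th:m}), this limit is $0$ when $X^i,X^j$ share a distribution and equals $D_\bH(\rho_a,\rho_b)$ otherwise, which is bounded below by $\delta:=\min_{a\ne b}D_\bH(\rho_a,\rho_b)>0$. As there are only finitely many pairs of samples, with probability~$1$ from some $n$ on the \emph{uniform} separation holds: the largest within-cluster distance $d_{in}$ (over same-distribution pairs) lies below $\delta/2$, while the smallest between-cluster distance $d_{out}$ exceeds $\delta/2$, so that $d_{in}<d_{out}$. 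This is the form of separation I will use; it is the quantitative strengthening of the pointwise strict separation of Theorem~\ref{th:ss}, and it is what both algorithms need. It then suffices to show that $d_{in}<d_{out}$ together with the correct value of $K$ forces each algorithm to output the target clustering.

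For farthest-point clustering I would show by induction that the chosen centers $c_1,\dots,c_K$ land in $K$ distinct target clusters. Suppose $c_1,\dots,c_i$ lie in distinct clusters with $i<K$, so some cluster is unrepresented. Any sample $p$ from an unrepresented cluster has score $\min_{t\le i}\hat D_\bH(p,c_t)\ge d_{out}$, because every $c_t$ lies in a different cluster than $p$; any sample $q$ from a cluster already represented by some $c_t$ has score $\le\hat D_\bH(q,c_t)\le d_{in}<d_{out}$. Hence the next center, the maximiser of the score, must fall in an unrepresented cluster, closing the induction. Once the centers sit one per cluster, the assignment step is correct: a sample $x$ in the cluster of $c_j$ satisfies $\hat D_\bH(x,c_j)\le d_{in}<d_{out}\le\hat D_\bH(x,c_{j'})$ for every $j'\ne j$, so its nearest center is $c_j$.

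For average linkage I would maintain the invariant that every cluster produced during the run is contained in a single target cluster, which holds at initialisation since each cluster is a singleton. The decisive computation is that the average-linkage distance between two current clusters is at most $d_{in}$ when both lie in the same target cluster (every pairwise distance is $\le d_{in}$) and at least $d_{out}$ when they lie in different target clusters (every pairwise distance is $\ge d_{out}$). Consequently, whenever some target cluster is still split among two or more current clusters, a within-target pair exists whose average distance is $\le d_{in}<d_{out}$, so the minimising pair selected by the algorithm is within-target and the merge preserves the invariant. Thus only within-target merges occur; starting from $N$ singletons the algorithm performs $N-K$ such merges and halts at exactly $K$ clusters, where a pigeonhole argument (each of the $K$ nonempty target clusters must contain at least one of the $K$ current clusters) shows the output coincides with the target clustering.

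I expect the main obstacle to be the bookkeeping for average linkage, namely ensuring that no between-target merge is ever selected before all within-target merges are exhausted. This is exactly what the uniform gap $d_{in}<d_{out}$ rules out, by separating the two regimes of average distances. I would therefore extract this uniform gap from Theorem~\ref{th:cons} at the outset rather than rely on the pointwise form alone, since a simple example shows the pointwise version does not by itself drive average linkage or farthest point, whereas the uniform gap (freely available here because within-cluster distances vanish) makes both arguments elementary. All the probabilistic and time-series content is thereby absorbed into the convergence statement, leaving only the combinatorial reductions above.
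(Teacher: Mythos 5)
Your proof is correct, and it is in fact more careful than the paper's own treatment: the paper states Theorem~\ref{th:clt} as a corollary of Theorem~\ref{th:ss} (the strict separation property) with no further argument, whereas you bypass Theorem~\ref{th:ss} and extract directly from Theorem~\ref{th:cons} the \emph{uniform} gap $d_{in}<d_{out}$ (every within-cluster distance below every between-cluster distance), then give complete deterministic arguments for both algorithms. Your reason for insisting on the uniform gap is sound, and it marks a genuine difference between the two routes. Pointwise strict separation alone does not imply that average linkage or farthest-point clustering stopped at $K$ clusters recovers the target clustering: take three target clusters $\{a_1,a_2\}$, $\{b_1,b_2\}$, $\{c_1,c_2\}$ with $d(a_1,a_2)=10$, $d(b_1,b_2)=d(c_1,c_2)=1$, all $a$-to-$b$ and $a$-to-$c$ distances equal to $20$, and all $b$-to-$c$ distances equal to $5$; strict separation holds at every point, yet with $K=3$ average linkage merges $\{b_1,b_2\}$ with $\{c_1,c_2\}$ (average distance $5$) before it merges $a_1$ with $a_2$ (distance $10$), and farthest-point selects two centers inside the $a$-cluster and none in one of the others. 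So the implication the paper implicitly invokes holds only under the stronger uniform separation, which, as you observe, is freely available in this setting because all within-cluster distances $\hat D_\bH(X^i,X^j)$ converge to $0$ while between-cluster distances converge to limits bounded below by $\delta=\min_{a\ne b}D_\bH(\rho_a,\rho_b)>0$ ($D_\bH$ being a metric under the hypotheses, by Lemma~\ref{th:m}); finiteness of the number of pairs then gives the gap almost surely for all large enough $n$. Your induction for farthest point and your refinement-invariant-plus-pigeonhole argument for average linkage are both complete. In short, the paper's route is terser but, read literally, leaves a gap between Theorems~\ref{th:ss} and~\ref{th:clt}; your route closes it, at the modest cost of restating the separation in its uniform form.
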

Note that we do not require the samples to be independent; the joint distributions of the samples may be completely arbitrary, as long 
as the marginal distribution of each sample is stationary ergodic. These results can be extended to the online setting in the spirit of \cite{Khaleghi:12}.

For the case of unknown number of clusters, the situation is different: one has to make stronger assumptions on the distributions generating the samples,
since there is no algorithm that is consistent for all stationary ergodic distributions \cite{Ryabko:10discr}; such stronger  assumptions are considered in the next section. 

\section{Speed of convergence}\label{s:speed}
The results established so far are asymptotic out of necessity: they are established under the assumption
that the distributions involved are stationary ergodic,  which is too general 
to allow for   any meaningful finite-time performance guarantees.
While it is interesting to be able to establish consistency results under such general assumptions,
 it is also interesting to see what results can be  obtained under  stronger assumptions.
Moreover, since it is usually not known in advance whether the data at hand satisfies given assumptions or not, 
it appears important to have methods that have {\em both} asymptotic consistency in the general setting and finite-time performance
guarantees under stronger assumptions. It turns out that this is possible: for the methods based on $\hat D$ one can establish 
both the asymptotic performance guarantees for all stationary ergodic distributions and finite-sample performance guarantees under 
stronger assumptions, namely the uniform mixing conditions introduced below.

Another reason to consider stronger assumptions on the distributions generating the data is that  some statistical problems, such as homogeneity testing or clustering when 
the number of clusters is unknown, are provably impossible to solve under the only assumption of stationary ergodic distributions, as shown in~\cite{Ryabko:10discr}.

Thus, in this section we analyse the speed of convergence of $\hat D$ under certain mixing conditions, and use 
it to construct solutions for the problems of homogeneity and clustering with an unknown number of clusters, as well 
as to establish finite-time performance guarantees for the methods presented in the previous sections.

 A stationary distribution on the space of one-way infinite sequences $(\X^\N,\mathcal F)$ can be uniquely extended to a stationary 
 distribution on the space of two-way infinite sequences $(\X^{\mathbb Z},\mathcal F_{\mathbb Z})$ of the form $\dots,X_{-1},X_0,X_1,\dots$.
\begin{definition}[$\beta$-mixing coefficients]
For a process distribution $\rho$ define the mixing coefficients
$$
 \beta(\rho,k):=\sup_{\substack{A\in \sigma(X_{-\infty..0}),\\ B\in\sigma(X_{k..\infty})}} |\rho(A\cap B)-\rho(A)\rho(B)|
$$ where $\sigma(..)$ denotes the sigma-algebra of the random variables in brackets.
\end{definition}
When $\beta(\rho,k)\to0$ the process $\rho$ is called  uniformly $\beta$-mixing
(with coefficients $\beta(\rho,k)$); this condition is much stronger than ergodicity, but is much weaker than
the i.i.d.\ assumption. 

\subsection{Speed of convergence of $\hat D$}
Assume that a sample $X_{1..n}$ is generated by a distribution $\rho$ that is uniformly $\beta$-mixing
with coefficients $\beta(\rho,k)$ 
 Assume further that $\H_k$ is a set of indicator functions with 
a finite VC dimension $d_k$, for each $k\in\N$.

Since in this section we are after finite-time bounds, we fix a concrete choice of the weights $w_k$ in the definition~\ref{d:tele} of $\hat D$,
\begin{equation}\label{eq:wk}
 w_k:=2^{-k}.
\end{equation}

 The general tool that we use to obtain performance guarantees in this section is the following bound
that can be obtained from the results of \cite{Karandikar:02}. 
\begin{multline}\label{eq:mixtl}
 q_n(\rho,\H_k,\epsilon)
:= \rho\left(\sup_{h\in\H_k} \left|{1\over n-k+1}\sum_{i=1}^{n-k+1}  h (X_{i..i+k-1})
-\E_{\rho} h (X_{1..k})\right| >\epsilon\right)\\\le n\beta(\rho,t_n-k)+8t_n^{d_k+1}e^{-l_n\epsilon^2/8},
\end{multline}
where $t_n$ are  any integers in $1..n$ and $l_n=n/t_n$. 
The parameters $t_n$ should be set according to the values of $\beta$ in order to optimize the bound.

One can use similar bounds for classes of finite Pollard dimension \cite{Pollard:84} or more general bounds expressed in terms of covering
numbers, such as those given in \cite{Karandikar:02}.  Here we consider classes of finite VC dimension only 
 for the ease of the exposition and for the sake of continuity with the previous section (where it was necessary).

Furthermore, for the rest of this section we assume geometric $\beta$-mixing distributions, that is, $\beta(\rho,t)\le\gamma^t$ for some $\gamma<1$.
Letting $\l_n=t_n=\sqrt{n}$ the bound~(\ref{eq:mixtl}) becomes
\begin{equation}\label{eq:mix}
 q_n(\rho,\H_k,\epsilon)\le n\gamma^{\sqrt{n}-k}+8n^{(d_k+1)/2}e^{-\sqrt{n}\epsilon^2/8}.
\end{equation}

\begin{lemma}\label{th:mix}
Let two  samples $X_{1..n}$ and $Y_{1..m}$ be generated by stationary distributions $\rho_X$ and $\rho_Y$ whose $\beta$-mixing
coefficients satisfy $\beta(\rho_{.},t)\le\gamma^t$ for some $\gamma<1$. Let $\H_k$, $k\in\N$ be some sets of indicator functions on $\X^k$ whose
VC dimension $d_k$ is finite and non-decreasing with $k$. Then
\begin{equation}\label{eq:speed}
P( |\hat D_\bH(X_{1..{n}},Y_{1..{m}})- D_\bH(\rho_X,\rho_Y)|>\epsilon)\le 2\Delta(\epsilon/4,n')
\end{equation}
where $n':=\min\{n,m\}$, the probability is with respect to $\rho_X\times\rho_Y$ and 
\begin{equation}\label{eq:delt}
 \Delta(\epsilon,n)
:=-\log\epsilon( n\gamma^{\sqrt{n}+\log(\epsilon)}+8n^{(d_{-\log\epsilon}+1)/2}e^{-\sqrt{n}\epsilon^2/8}).
\end{equation}
\end{lemma}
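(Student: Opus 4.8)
The plan is to exploit the geometric weights $w_k=2^{-k}$ to truncate the infinite telescoping sum to its first $K\approx-\log(\epsilon/4)$ terms (logarithm base $2$, matching $w_k=2^{-k}$), and then to control each of these finitely many terms by the single-marginal concentration bound~\eqref{eq:mix}, paying only a union-bound factor. Write $D_k:=\sup_{h\in\H_k}|\E_{\rho_X}h(X_{1..k})-\E_{\rho_Y}h(Y_{1..k})|$ for the $k$-th true summand of~\eqref{eq:ts} and $\hat D_k$ for its empirical counterpart in~\eqref{eq:ets}. Since every $h\in\H_k$ is an indicator, all averages and expectations lie in $[0,1]$, so $D_k,\hat D_k\in[0,1]$ and $|\hat D_k-D_k|\le1$. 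First I would split
\[
\hat D_\bH-D_\bH=\sum_{k=1}^{K}w_k(\hat D_k-D_k)+\sum_{k=K+1}^{n'}w_k(\hat D_k-D_k)-\sum_{k>n'}w_k D_k,
\]
and bound the last two sums deterministically by $\sum_{k>K}2^{-k}=2^{-K}$ and $\sum_{k>n'}2^{-k}=2^{-n'}$. Choosing $K$ so that $2^{-K}\le\epsilon/4$ (and $n'>K$, so $2^{-n'}\le\epsilon/4$ as well), the event $\{\,|\hat D_k-D_k|\le\epsilon/2\text{ for all }k\le K\,\}$ forces $|\hat D_\bH-D_\bH|\le\tfrac{\epsilon}{2}\sum_{k\le K}w_k+2^{-K}+2^{-n'}\le\epsilon$.

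Consequently the bad event is contained in $\bigcup_{k=1}^{K}\{|\hat D_k-D_k|>\epsilon/2\}$. Writing $e_X^{(k)}$ and $e_Y^{(k)}$ for the two single-sample suprema whose tail probabilities are $q_n(\rho_X,\H_k,\cdot)$ and $q_m(\rho_Y,\H_k,\cdot)$ in~\eqref{eq:mixtl}, the triangle inequality for suprema gives
\[
|\hat D_k-D_k|\le e_X^{(k)}+e_Y^{(k)},
\]
so $\{|\hat D_k-D_k|>\epsilon/2\}$ forces $e_X^{(k)}>\epsilon/4$ or $e_Y^{(k)}>\epsilon/4$. A union bound then yields
\[
P(|\hat D_\bH-D_\bH|>\epsilon)\le\sum_{k=1}^{K}\big(q_n(\rho_X,\H_k,\epsilon/4)+q_m(\rho_Y,\H_k,\epsilon/4)\big).
\]

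To identify the right-hand side with $2\Delta(\epsilon/4,n')$ I would substitute~\eqref{eq:mix} and use two monotonicity observations. The bound in~\eqref{eq:mix} is non-decreasing in $k$, since $d_k$ is non-decreasing and $\gamma^{\sqrt{n}-k}$ grows in $k$ as $\gamma<1$; hence each of the $2K$ terms is at most the $k=K$ term. It is also eventually non-increasing in $n$, so for $n,m\ge n'$ each term is at most its value at $n'$. This gives
\[
P(|\hat D_\bH-D_\bH|>\epsilon)\le 2K\big(n'\gamma^{\sqrt{n'}-K}+8\,n'^{(d_K+1)/2}e^{-\sqrt{n'}(\epsilon/4)^2/8}\big),
\]
and with $K=-\log(\epsilon/4)$ one has $\gamma^{\sqrt{n'}-K}=\gamma^{\sqrt{n'}+\log(\epsilon/4)}$, $d_K=d_{-\log(\epsilon/4)}$ and $2K=2(-\log(\epsilon/4))$, so the bound is exactly $2\Delta(\epsilon/4,n')$ as in~\eqref{eq:delt}.

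The two reductions—truncating the telescope via the summable weights and collapsing the union bound onto its worst ($k=K$) term—are the conceptual core and are clean. The hard part will be the bookkeeping that makes the constants line up precisely with~\eqref{eq:delt}: the rounding of $K$ to an integer, the split of the budget $\epsilon$ into $\epsilon/2$ for the truncated sum and $\epsilon/4$ each for the two tails, and the passage $e_X^{(k)}+e_Y^{(k)}>\epsilon/2\Rightarrow\max>\epsilon/4$ that produces the argument $\epsilon/4$. The only genuinely delicate point is the monotonicity-in-$n$ step, since the right-hand side of~\eqref{eq:mix} is only \emph{eventually} decreasing; for the small-$n'$ regime where this fails I would simply note that there $2\Delta(\epsilon/4,n')\ge1$, making the claimed inequality vacuous, while for $n'$ large it is genuinely decreasing and the argument above applies.
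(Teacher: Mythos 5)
Your proof is correct and follows essentially the same route as the paper's: truncate the telescope sum at $K\approx-\log\epsilon$ terms using the geometric weights, apply a union bound over those terms with per-sample accuracy $\epsilon/4$ via the mixing bound~\eqref{eq:mix}, and collapse the resulting sum onto its worst ($k=K$) term by monotonicity in $k$. The paper's proof is a compressed version of exactly this argument (it even leaves implicit the monotonicity-in-$n'$ point that you explicitly flag and resolve via vacuousness of the bound for small $n'$), so nothing further is needed.
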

\begin{proof}
 From~\eqref{eq:wk} we have $\sum_{k=-\log\epsilon/2}^\infty w_k<\epsilon/2$. Using this and the definitions~(1) and (2) 
of $D_{\bf H}$ and $\hat D_{\bf H}$ we obtain
\begin{multline*}
 P( |\hat D_\bH(X_{1..{n_1}},Y_{1..{n_2}})- D_\bH(\rho_X,\rho_Y)|>\epsilon)\\\leq \sum_{k=1}^{-\log( \epsilon/2)}( q_n(\rho_X,\H_k,\epsilon/4)+q_n(\rho_Y,\H_k,\epsilon/4)),
\end{multline*}
which, together with~(6) 
 implies the statement.
\end{proof}

\subsection{Homogeneity testing}
Given two samples $X_{1..n}$ and $Y_{1..m}$ generated by distributions $\rho_X$ and $\rho_Y$ respectively, the problem of homogeneity testing 
(or the two-sample problem) consists in deciding whether $\rho_X=\rho_Y$.  
A test is called (asymptotically) consistent if its probability of error goes to zero as $n':=\min\{m,n\}$ goes to infinity.
As mentioned above, in general, for stationary ergodic time series distributions there is no asymptotically consistent test for homogeneity \cite{Ryabko:10discr} (even 
for binary-valued time series); thus,  stronger assumptions are in order.

Homogeneity testing is one of the classical problems of mathematical statistics, and one of the most studied ones. Vast
literature exits on homogeneity testing for i.i.d.\ data, and for dependent processes as well. We do not attempt to survey this literature here.
Our contribution to this line of research is to show that this problem can be reduced 
 (via the telescope distance) to binary classification, in the case of strongly dependent processes satisfying some mixing conditions.

It is easy to see that under the mixing conditions of Lemma~1 a consistent test for homogeneity exists, and finite-sample
performance guarantees can be obtained. It is enough to find a sequence $\epsilon_n\to0$ such that $\Delta(\epsilon_n,n)\to0$
(see \eqref{eq:delt}). 
Then the test can be constructed as follows: say that the two sequences $X_{1..n}$ and $Y_{1..m}$ were generated by the same distribution 
if $\hat D_\bH(X_{1..n},Y_{1..m})<\epsilon_{\min\{n,m\}}$; otherwise say that they were generated by  different distributions.

\begin{theorem}\label{th:hom} Under the conditions of Lemma~\ref{th:mix} the probability of Type~I error (the distributions are the same but the test
says they are different) of the described test is upper-bounded by $2\Delta(\epsilon/4,n')$. The probability of Type~II error (the distributions
are different but the test says they are the same) is upper-bounded by $2\Delta((\delta-\epsilon)/4,n')$ where $\delta:=D_{\bf H}(\rho_X,\rho_Y)$.
\end{theorem}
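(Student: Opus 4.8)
The plan is to obtain both error bounds directly from Lemma~\ref{th:mix}, which already controls the deviation of $\hat D_\bH$ from $D_\bH$. The only work is to contain each type of test error inside a single deviation event of the form $\{|\hat D_\bH-D_\bH|>\text{margin}\}$ and then read off the corresponding $\Delta$.

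First I would handle the Type~I error. When $\rho_X=\rho_Y$, the fact that $D_\bH$ is a metric (Lemma~\ref{th:m}) gives $D_\bH(\rho_X,\rho_Y)=0$. The test commits a Type~I error precisely when it declares the distributions different, i.e.\ when $\hat D_\bH(X_{1..n},Y_{1..m})\ge\epsilon$. Since $D_\bH(\rho_X,\rho_Y)=0$, this event coincides with $|\hat D_\bH(X_{1..n},Y_{1..m})-D_\bH(\rho_X,\rho_Y)|\ge\epsilon$, whose probability is at most $2\Delta(\epsilon/4,n')$ by Lemma~\ref{th:mix}.

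Next I would treat the Type~II error. Now $\rho_X\ne\rho_Y$, so $\delta:=D_\bH(\rho_X,\rho_Y)>0$. A Type~II error occurs when the test declares the distributions the same, i.e.\ when $\hat D_\bH(X_{1..n},Y_{1..m})<\epsilon$. Assuming $\delta>\epsilon$ (otherwise the asserted bound is vacuous), the reverse triangle inequality forces $|\hat D_\bH(X_{1..n},Y_{1..m})-D_\bH(\rho_X,\rho_Y)|>\delta-\epsilon$ on this event. Applying Lemma~\ref{th:mix} with deviation margin $\delta-\epsilon$ in place of $\epsilon$ then yields the bound $2\Delta((\delta-\epsilon)/4,n')$.

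Since the statement is essentially a corollary of Lemma~\ref{th:mix}, there is no deep obstacle; the only points requiring care are the interplay between the fixed threshold $\epsilon$ and the two target distances ($0$ for Type~I and $\delta$ for Type~II), the hypothesis $\delta>\epsilon$ needed for the Type~II bound to be meaningful, and the harmless gap between the strict and non-strict inequalities at the threshold, which I would absorb by passing to an arbitrarily close value of $\epsilon$ if needed. Once each error is rewritten as a single deviation event of $|\hat D_\bH-D_\bH|$ exceeding an explicit margin, Lemma~\ref{th:mix} applies verbatim and both bounds follow.
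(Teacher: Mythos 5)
Your proposal is correct and follows essentially the same route as the paper: both treat the theorem as an immediate corollary of Lemma~\ref{th:mix}, bounding the Type~I error via the deviation event with $D_\bH(\rho_X,\rho_Y)=0$ and the Type~II error via the event $D_\bH(\rho_X,\rho_Y)-\hat D_\bH>\delta-\epsilon$. Your explicit attention to the strict/non-strict inequality at the threshold and to the implicit assumption $\delta>\epsilon$ is slightly more careful than the paper's terse argument, but the substance is identical.
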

\begin{proof}
 The  statement is an immediate consequence of Lemma~\ref{th:mix}. Indeed, for the Type~I error, the two sequences are generated
by the same distribution, so the probability of error of the test is given by~\eqref{eq:speed} with $D_\bH(\rho_X,\rho_Y)=0$. The probability of Type~II error is 
given by $P(D_\bH(\rho_X,\rho_Y)- \hat D_\bH(X_{1..{n_1}},Y_{1..{n_2}})>\delta-\epsilon)$, which is upper-bounded by $2\Delta((\delta-\epsilon))/4,n')$ as follows from~\eqref{eq:speed}.
\end{proof}

The optimal choice of $\epsilon_n$ may depend on the speed at which $d_k$ (the VC dimension of $\H_k$) increases; however,
for most natural cases (recall that $\H_k$ are also parameters of the algorithm) 
this growth is polynomial, so the main term to control is $e^{-\sqrt{n}\epsilon^2/8}$.

For example, if $\H_k$ is the set of halfspaces in $\X^k=\R^k$ then $d_k= k+1$ and one can chose $\epsilon_n:=n^{-1/8}$.
The resulting probability of Type~I error decreases as $\exp(-n^{1/4})$.

\subsection{Clustering with a known or unknown number of clusters}
If the distributions generating the samples satisfy certain mixing conditions, then we can augment Theorems~\ref{th:ss} and~\ref{th:clt} with 
finite-sample performance guarantees. 
\begin{theorem}
Let the distributions  $\rho_1,\dots,\rho_k$ generating the samples  $X^1=(X_1^1,\dots,X_{n_1}^1),\dots,X^N=(X_1^N,\dots,X_{n_{N}}^N)$ 
satisfy the conditions of Lemma~\ref{th:mix}.  Define $\delta:=\min_{i,j=1..N, i\ne j}D_{\bf H}(\rho_i,\rho_j)$ and $n:=\min_{i=1..N} n_i$.
Then with probability at least $$1-N(N-1)\Delta(\delta/12,n')$$ the target 
clustering of the samples has the strict separation property. In this case single linkage and farthest point algorithms output
the target clustering.
\end{theorem}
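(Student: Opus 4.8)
The plan is to reduce the theorem to the strict separation property via a uniform control over all pairwise empirical distances, then apply Lemma~\ref{th:mix} with a carefully chosen accuracy parameter and a union bound. The key observation is that the target clustering fails the strict separation property with respect to $\hat D_{\bf H}$ only if, for some three samples $X^a, X^b, X^c$ with $\rho_a=\rho_b\ne\rho_c$, the within-cluster distance $\hat D_{\bf H}(X^a,X^b)$ exceeds the between-cluster distance $\hat D_{\bf H}(X^a,X^c)$. Since the true distances satisfy $D_{\bf H}(\rho_a,\rho_b)=0$ and $D_{\bf H}(\rho_a,\rho_c)\ge\delta$, strict separation is guaranteed as soon as every empirical distance is within $\delta/4$ (say) of its true value: then within-cluster empirical distances are below $\delta/4$ while between-cluster empirical distances exceed $3\delta/4$.

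First I would make the choice of accuracy explicit. Set the per-pair deviation threshold to $\delta/3$, so that each good event is $|\hat D_{\bf H}(X^i,X^j)-D_{\bf H}(\rho_i,\rho_j)|\le\delta/3$. On the intersection of these events over all pairs, any two samples from the same cluster have $\hat D_{\bf H}\le\delta/3$, and any two from different clusters have $\hat D_{\bf H}\ge\delta-\delta/3=2\delta/3$; since $\delta/3<2\delta/3$, the strict separation property holds. Next I would bound the probability that some pair violates its good event. By Lemma~\ref{th:mix}, for a single pair $(i,j)$ the failure probability is at most $2\Delta((\delta/3)/4,n')=2\Delta(\delta/12,n')$, where $n'=\min\{n_i,n_j\}\ge n$ and $\Delta$ is nonincreasing in its second argument, so we may replace $n'$ by $n$ (or $n'$, matching the stated bound). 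There are $\binom{N}{2}=N(N-1)/2$ unordered pairs, and a union bound over them yields total failure probability at most $N(N-1)\Delta(\delta/12,n')$, giving exactly the claimed success probability $1-N(N-1)\Delta(\delta/12,n')$.

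Finally I would invoke Theorem~\ref{th:clt}: once the strict separation property holds, the single linkage (equivalently, the average linkage in the separated regime) and farthest point algorithms are known to recover the target clustering, so on the high-probability event both algorithms output the target clustering. I expect the main subtlety to be the bookkeeping of constants in the accuracy threshold: one must ensure the chosen per-pair tolerance is strictly less than $\delta/2$ so that within- and between-cluster empirical distances remain separated, and simultaneously that it matches the $\delta/12$ appearing in the final bound after accounting for the internal factor of $4$ in Lemma~\ref{th:mix}. The probabilistic argument itself is routine once the deterministic separation step and the union bound are aligned; the only real care needed is verifying that $\Delta$ is monotone in $n'$ so that replacing the pairwise minima by the global minimum $n$ is legitimate, and that the mixing hypotheses of Lemma~\ref{th:mix} are inherited by each pair of samples from the blanket assumption on $\rho_1,\dots,\rho_k$.
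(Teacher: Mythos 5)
Your proposal is correct and follows essentially the same route as the paper's proof: the same $\delta/3$ deviation threshold per pair (yielding within-cluster empirical distances at most $\delta/3$ and between-cluster ones at least $2\delta/3$), the same application of Lemma~\ref{th:mix} with $\epsilon=\delta/3$ producing the factor $2\Delta(\delta/12,n')$, and the same union bound over the $N(N-1)/2$ pairs. The concluding step, that strict separation makes the clustering algorithms output the target clustering, is likewise treated in the paper as immediate (it dismisses it as obvious), matching your appeal to Theorem~\ref{th:clt}.
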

\begin{proof}
 Note that a sufficient condition for the strict separation property to hold is that for every pair $i,j$ of samples generated by the same distribution we have $\hat D_{\bf H}(X^i,X^j)\le\delta/3$,
and for every  pair $i,j$ of samples generated by different distributions we have $\hat D_{\bf H}(X^i,X^j)\ge2\delta/3$. 
Using Lemma~\ref{th:mix},  the probability of such an even  (for each pair) is upper-bounded  by $2\Delta(\delta/12,n')$, which, multiplied
by the total number  $N(N-1)/2$ of pairs gives the statement. The second
statement is obvious.
\end{proof}

As with homogeneity testing, while in the general case of  stationary ergodic distributions it is impossible to have a consistent clustering algorithm when the number 
of clusters $k$ is unknown, the situation changes if the distributions satisfy certain mixing conditions.
In this case a consistent clustering algorithm can be  obtained as follows. Assign to the same cluster all samples that are at most $\epsilon_n$-far from each
other, where the threshold $\epsilon_n$ is selected the same way as for homogeneity testing:  $\epsilon_n\to0$ and $\Delta(\epsilon_n,n)\to0$.
The optimal choice of this parameter depends on the choice of $\H_k$ through the speed of growth of the VC dimension $d_k$ of these sets.

\begin{theorem}
Given $N$ samples  generated by $k$ different stationary distributions $\rho_i$, $i=1..k$ (unknown $k$) all satisfying the conditions of  Lemma~\ref{th:mix},
 the probability of error (misclustering at least one sample) of the described algorithm is upper-bounded 
by $$N(N-1)\max\{\Delta(\epsilon/4,n'),\Delta((\delta-\epsilon)/4,n')\}$$ where $\delta:=\min_{i,j=1..k,i\ne j}D_{\bf H}(\rho_i,\rho_j)$ and $n=\min_{i=1..N} n_i$, with $n_i$, $i=1..N$ being lengths of the samples.
\end{theorem}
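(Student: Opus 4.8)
The plan is to reduce the whole clustering error to a collection of pairwise same/different decisions, and then to observe that the single threshold $\epsilon_n$ makes the correct pairwise decision exactly when the corresponding homogeneity test succeeds. First I would fix the favourable event $G$ on which, simultaneously, every pair $(i,j)$ of samples drawn from the \emph{same} distribution satisfies $\hat D_{\bf H}(X^i,X^j)\le\epsilon$, and every pair drawn from \emph{different} distributions satisfies $\hat D_{\bf H}(X^i,X^j)>\epsilon$. The key structural observation is that on $G$ the described algorithm outputs the target clustering: the rule places two samples in the same cluster precisely when their empirical distance is at most $\epsilon$, so on $G$ the induced adjacency graph has an edge between $X^i$ and $X^j$ if and only if they were generated by the same distribution. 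Hence its connected components coincide with the target clusters, and this holds with no knowledge of $k$. In particular the error event (at least one sample misclustered) is contained in $G^c$, so it suffices to bound $P(G^c)$.

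Next I would bound $P(G^c)$ by a union bound over the $N(N-1)/2$ unordered pairs, treating the two pair types with the two error estimates already isolated in the homogeneity-testing analysis. For a pair coming from the same distribution we have $D_{\bf H}(\rho_i,\rho_j)=0$, so the bad event $\{\hat D_{\bf H}(X^i,X^j)>\epsilon\}$ is exactly the Type~I event, and Lemma~\ref{th:mix} bounds its probability by $2\Delta(\epsilon/4,n')$. For a pair coming from different distributions we have $D_{\bf H}(\rho_i,\rho_j)\ge\delta$, so the bad event $\{\hat D_{\bf H}(X^i,X^j)\le\epsilon\}$ forces $D_{\bf H}(\rho_i,\rho_j)-\hat D_{\bf H}(X^i,X^j)>\delta-\epsilon$; applying the two-sided bound of Lemma~\ref{th:mix} with accuracy $\delta-\epsilon$ gives probability at most $2\Delta((\delta-\epsilon)/4,n')$. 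Each pair therefore contributes at most $2\max\{\Delta(\epsilon/4,n'),\Delta((\delta-\epsilon)/4,n')\}$, and multiplying by the number $N(N-1)/2$ of pairs yields the claimed bound $N(N-1)\max\{\Delta(\epsilon/4,n'),\Delta((\delta-\epsilon)/4,n')\}$.

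The only genuinely delicate point is the first step: checking that correct pairwise verdicts really do reconstruct the partition when $k$ is unknown. This is exactly where the fixed threshold $\epsilon_n$ does the work that the prescribed $K$ did in Theorem~\ref{th:clt}. Because the true pairwise distances $D_{\bf H}(\rho_i,\rho_j)$ take only the value $0$ or a value at least $\delta$, and because $\epsilon$ lies strictly between these two scales (this is precisely what the requirement $\epsilon_n\to0$ buys against the fixed positive gap $\delta$), the threshold cleanly separates the two regimes on $G$; no merge/split ambiguity can arise, and transitivity of ``belonging to the same cluster'' holds automatically, so the connected components are well defined and equal the target clusters. Everything beyond this observation is a routine union bound, so I expect no further obstacle.
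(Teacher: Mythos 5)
Your proposal is correct and takes essentially the same route as the paper: the paper's own proof is the one-line remark that the statement ``follows from Theorem~\ref{th:hom},'' which is precisely your reduction of the clustering error to pairwise same/different homogeneity verdicts at threshold $\epsilon_n$ (Type~I error $2\Delta(\epsilon/4,n')$ for same-distribution pairs, Type~II error $2\Delta((\delta-\epsilon)/4,n')$ for different-distribution pairs) followed by a union bound over the $N(N-1)/2$ pairs. You simply make explicit what the paper leaves implicit, namely the good event $G$ and the observation that correct pairwise verdicts force the connected components to coincide with the target clusters even when $k$ is unknown.
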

\begin{proof}
 The statement follows from Theorem~\ref{th:hom}.
\end{proof}

\section{Experiments}\label{s:exp}
For experimental evaluation we chose the problem of time-series clustering. Average-linkage clustering is used, 
with the telescope distance between samples calculated using an SVM, as described in Section~\ref{s:red}. In all experiments, SVM is used with radial basis kernel, with default parameters of libsvm \cite{Chang:11}.
The parameters $w_k$ in the definition of the telescope distance (Definition~\ref{d:tele}) are set to $w_k:=k^{-2}$.

\subsection{Synthetic data}
For the artificial setting we have chosen  highly-dependent time series distributions which have  the same single-dimensional marginals
and which cannot be well approximated by finite- or countable-state models. 
The distributions $\rho(\alpha)$, $\alpha\in(0,1)$, are constructed as follows.
Select  $r_0 \in [0,1]$ uniformly at random; then, for each $i=1..n$ obtain $r_i$ by shifting
$r_{i-1}$ by $\alpha$ to the right, and removing the integer part. 
The time series $(X_1,X_2,\dots)$ is then obtained from $r_i$ by drawing
a point from a distribution law $\mathcal N_1$ if $r_i<0.5$ and from  $\mathcal N_2$ otherwise.
$\mathcal N_1$ is a 3-dimensional  Gaussian
with mean of 0 and  covariance matrix $\operatorname{Id}\times 1/4$. $\mathcal N_2$ is the same but with  mean  $1$.
If $\alpha$ is irrational\footnote{in experiments simulated by a \texttt{longdouble} with a long mantissa}
 then the distribution $\rho(\alpha)$  is stationary ergodic, but does not belong to any  simpler natural
distribution family \cite{Sheilds:96}.
The single-dimensional marginal is the same for all values of $\alpha$. The latter two properties make all parametric and most non-parametric methods 
inapplicable to this problem.

In our experiments, we use two  process distributions  
 $\rho(\alpha_i), i\in\{1,2\}$, with 
$\alpha_1 = 0.31...,~\alpha_2 = 0.35...,$.   The
dependence of  error rate on the length of time series is shown on Figure~\ref{fig:errlen}. 
One clustering experiment on sequences of length 1000 takes about 5 min.\ on a standard laptop.
\subsection{Real data}
To demonstrate the applicability of the proposed methods to realistic scenarios, we chose
the brain-computer interface data from BCI competition III \cite{Milan:04}.
The dataset consists of (pre-processed) BCI recordings of mental imagery:
a person is thinking about one  of three subjects (left foot,  right foot,  a  random letter).
Originally, each time series consisted of several consecutive sequences of different classes,
and the problem was supervised: three time series for training and one for testing.
We split each of the original time series into classes, and then used our clustering algorithm in a completely unsupervised 
setting.  The original problem is 96-dimensional, but we used only  the first 3 dimensions (using all 96 gives worse performance). The typical sequence length is 300.
 The performance is reported in Table~\ref{fig:errlen}, labeled $\operatorname{TS_{SVM}}$. All the computation for this experiment takes approximately 6 minutes on a standard laptop.

The following methods were used for comparison. First, we used   dynamic time wrapping (DTW) \cite{Sakoe:78}
which is a popular base-line approach for time-series clustering. 
The other two methods  in Table~1 are from \cite{Harchaoui:08}. The comparison
is not fully relevant, since the results in \cite{Harchaoui:08} are for different settings;
the method KCpA was used in change-point estimation method (a different but also unsupervised setting), and SVM 
was used in a supervised setting. The latter is of particular interest since the classification method 
we used in the telescope distance is also SVM, but our setting is unsupervised (clustering).

\begin{figure}[!htbp]

\begin{minipage}{6.8cm}
\begin{centering}
\vspace{-0.4cm}
\includegraphics[width=6.5cm]{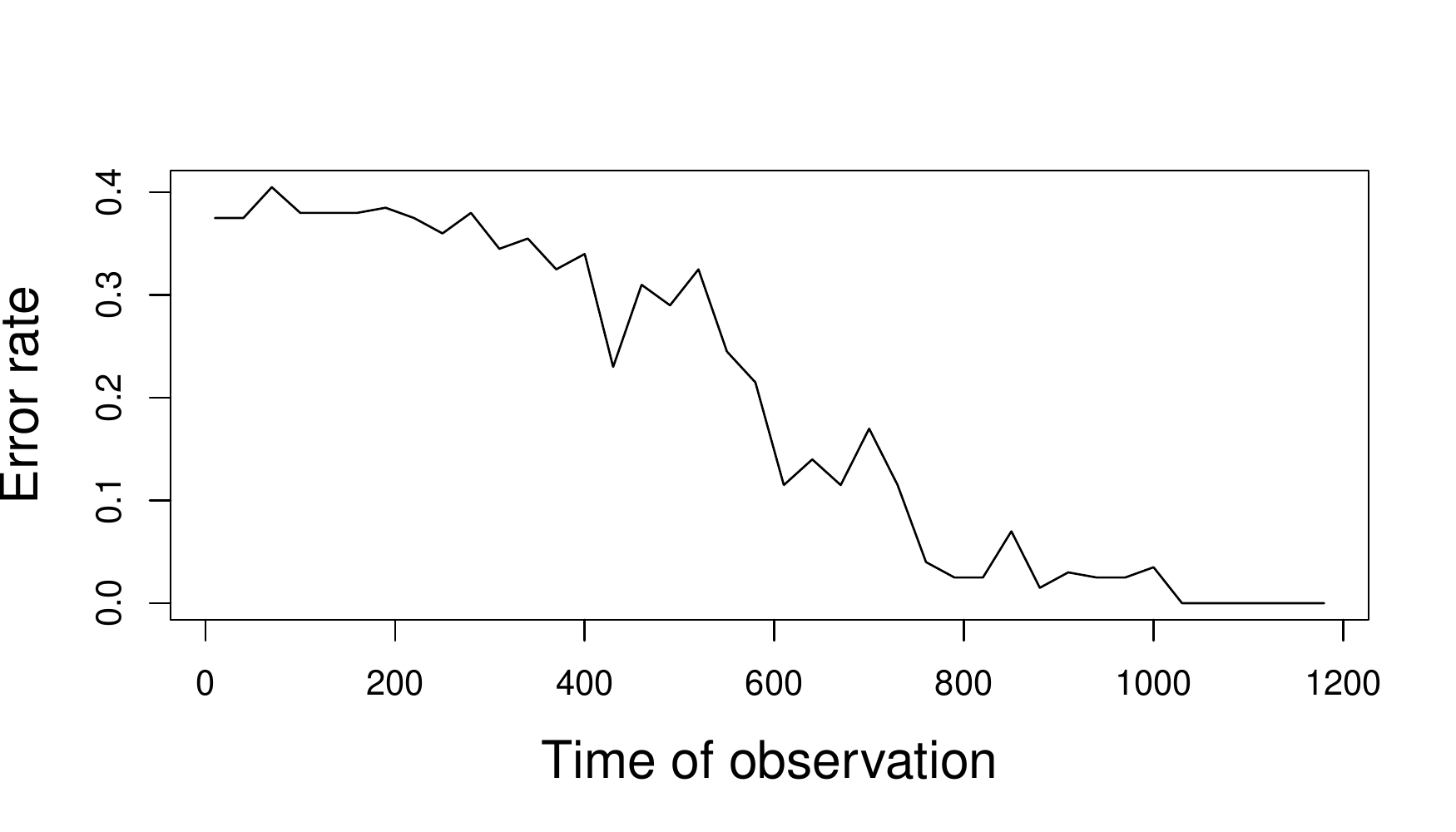}\vspace{-0.2cm}
\vspace{-0.1cm}
\captionof{figure}{Error of two-class clustering using $\operatorname{TS_{SVM}}$; 10 time series in each target cluster, averaged over 20 runs.}
\label{fig:errlen} 
\end{centering}
\end{minipage}
\hspace{0.1cm}
\begin{minipage}{6.5cm}
\begin{centering}
\begin{tabular}{|l|c|c|c|}
\hline
~ & $s_1$ & $s_2$ & $s_3$ \\
\hline
$\operatorname{TS_{SVM}}$ & \textbf{84\%}       & \textbf{81\%}       & \textbf{61\%}       \\ 
DTW            & 46\%       & 41\%       & 36\%       \\ 
KCpA   & 79\%       & 74\%       & 61\%       \\ 
SVM   & 76\%       & 69\%       & 60\%       \\ 
\hline
\end{tabular}
\captionof{table}{Clustering accuracy in the BCI dataset. 3 subjects (columns), 4 methods (rows). Our method is $\operatorname{TS_{SVM}}$.}
\label{tab:acc} 
\end{centering}
\end{minipage}

\end{figure}

{\small {\bf Acknowledgments.} This research 
was funded by the Ministry of Higher Education and Research, Nord-Pas-de-Calais Regional Council and 
 FEDER (Contrat de Projets Etat Region CPER  2007-2013),
  ANR  projects EXPLO-RA (ANR-08-COSI-004),
  Lampada (ANR-09-EMER-007)  and CoAdapt,
  and by the European Community's  FP7 Program  under grant agreements 
 n$^\circ$\,216886 (PASCAL2) and n$^\circ$\,270327 (CompLACS).}

\end{document}